\documentclass[10pt,twocolumn,letterpaper]{article}

\usepackage{cvpr}              

\usepackage{graphicx}
\usepackage{amsmath}
\usepackage{amssymb}
\usepackage{booktabs}
\usepackage{comment}
\usepackage{mathtools} 
\usepackage{siunitx}
\usepackage{multicol}
\usepackage{multirow}

\usepackage[pagebackref,breaklinks,colorlinks]{hyperref}

\usepackage[capitalize]{cleveref}
\crefname{section}{Sec.}{Secs.}
\Crefname{section}{Section}{Sections}
\Crefname{table}{Table}{Tables}
\crefname{table}{Tab.}{Tabs.}

\usepackage{amsmath, amssymb, amsthm}
\newtheorem{proposition}{Proposition}

\def\cvprPaperID{5} 
\def\confName{CVPR}
\def\confYear{2026}

\begin{document}

\title{Towards Minimal Focal Stack in Shape from Focus}

\author{
Khurram Ashfaq \quad Muhammad Tariq Mahmood\thanks{Corresponding author}\\
Korea University of Technology and Education\\
{\tt\small khurram@koreatech.ac.kr \quad tariq@koreatech.ac.kr}
}
\maketitle

\begin{abstract}
Shape from Focus (SFF) is a depth reconstruction technique that estimates scene structure from focus variations observed across a focal stack, that is, a sequence of images captured at different focus settings. A key limitation of SFF methods is their reliance on densely sampled, large focal stacks, which limits their practical applicability. In this study, we propose a focal stack augmentation that enables SFF methods to estimate depth using a reduced stack of just two images, without sacrificing precision.  We introduce a simple yet effective physics-based focal stack augmentation that enriches the stack with two auxiliary cues: an all-in-focus (AiF) image estimated from two input images, and Energy-of-Difference (EOD) maps, computed as the energy of differences between the AiF and input images. Furthermore, we propose a deep network that computes a deep focus volume from the augmented focal stacks and iteratively refines depth using convolutional Gated Recurrent Units (ConvGRUs) at multiple scales. Extensive experiments on both synthetic and real-world datasets demonstrate that the proposed augmentation benefits existing state-of-the-art SFF models, enabling them to achieve comparable accuracy. The results also show that our approach maintains state-of-the-art performance with a minimal stack size.
\end{abstract}


\section{Introduction}

Shape-from-Focus (SFF) is a passive optical technique that recovers dense depth by analyzing a focal stack, a sequence of images captured at varying focus distances \cite{nayar1994shape}. The scene’s depth range is discretely sampled using an appropriate step size, which determines both the change in focus distance between successive captures and the number of images in the focal stack \cite{muhammad2012sampling, pertuz2015efficient}. The stack size is critical: too many images slow down the process, while too few degrade depth accuracy.

Conventional SFF methods estimate depth by computing a focus measure at discrete focus steps, making the number of images in the focal stack a critical factor. A depth map is obtained by identifying the frame where it appears most in focus. These approaches typically require densely sampled stacks containing 50–100 images. For example, methods \cite{subbarao1995accurate, muhammad2012sampling} used real and synthetic cone stacks with 90 and 97 images, respectively, while \cite{mahmood2012nonlinear, moeller2015variational} relied on Sine and Wave stacks with 60 images each, and \cite{suwajanakorn2015depth} used up to 33 images per stack. Advances in deep learning have greatly improved SFF by enabling continuous depth estimation rather than discrete outputs and thus reducing stack size. However, deep SFF models still depend on large focal stacks for training and evaluation. Although modern datasets use fewer images, typical stacks still contain around 5–15 frames. For instance, DDFFNet \cite{hazirbas2019deep}, one of the earliest deep SFF networks, was trained on a large dataset named DDFF of over 5,000 focal stacks, each containing 10 images captured using a light-field camera. Similarly, the Middlebury \cite{scharstein2014high} and FlyingThings3D (FT) \cite{mayer2016large} datasets, repurposed for SFF in \cite{wang2021bridging}, both consist of 15 images per stack, while the Focus-on-Defocus (FoD) dataset \cite{maximov2020focus} provides 500 stacks with 5 images each, representing the smallest so far. In contrast, a recent microscopic dataset \cite{dogan2025swin} employed sequences of 180 high-resolution images. Despite these reductions, deep SFF models still require more inputs than stereo or monocular methods, making it crucial to minimize the number of focal images for faster inference and real-time deployment.

Mostly, in deep learning SFF approaches, the focal stack is fed directly into a deep model to extract features (focus volumes), without using meaningful data augmentation beyond the raw images. However, prior studies have shown that deep encoders learn more effectively when supplemented with auxiliary information \cite{cheung2023survey}. Motivated by this, we hypothesize that integrating auxiliary cues with a small-sized focal stack can help the encoder learn more discriminative for accurate depth estimation.

In this paper, we propose a physics-guided augmentation of the input focal stack that significantly reduces the number of images required for deep learning based-SFF. The augmentation introduces two auxiliary cues: an AiF image estimated from the input stack, and Energy of Difference (EOD) maps, which measure the per-pixel differences between each defocused image and the AiF. These cues provide interpretable, physics-consistent depth information aligned with the principles of image formation. Building on this augmented input, we further develop a recurrent deep network that iteratively updates depth rather than collapsing the focus volume in a single step. Experiments show that the proposed augmented image stack reduces the required input to two or three images while improving depth estimates, enabling state-of-the-art performance on both existing and proposed SFF frameworks.

\section{Related Work}
\subsection{Traditional Methods}
In traditional SFF methods, depth is estimated by evaluating the focus quality of each pixel across the focal stack using a focus measure (FM), forming a three-dimensional focus volume (FV) that captures sharpness variations across slices. Common FMs are categorized as derivative-based, transform-based, or statistics-based. Derivative-based methods \cite{nayar1994shape,jeon2019ring} rely on image gradients or second-order derivatives such as Sobel and Laplacian operators, while transform-based methods \cite{ali20203d, nie2021focus} assess high-frequency energy in domains like DCT or DWT. Statistics-based approaches, including gray level variance \cite{krotkov1988focusing} and higher order moments \cite{zhang2000new}, measure local intensity variations to estimate focus. More recently, a dual-stage method \cite{ashfaq2026dual} addressed the limitations of direct gray scale conversion by transforming color focal stacks into an informative scalar representation before applying focus measures for depth estimation. Another recent method \cite{ashfaq2026depth} generates multiple FVs from directional focus responses and fuses them into a single scalar response using a vector-to-scalar fusion approach. Since FVs are often noisy, they are refined through filtering or optimization. Linear filters are simple but sensitive to window size \cite{nayar1994shape}, whereas nonlinear diffusion \cite{mahmood2012nonlinear} better preserves edges. Optimization-based methods \cite{ali2021robust,ali2022energy} further improve accuracy. Finally, depth is estimated using a Winner-Takes-All strategy and refined through post-processing such as median filtering \cite{minhas2009}.

\subsection{Deep Methods}
In deep learning-based SFF, convolutional neural networks predict depth from a focal stack through two main stages: constructing a deep focus volume and regressing depth from it. In the first stage, encoder-decoder (ED) architectures extract deep focus measures, using either 2D or 3D convolutions. 2D ED networks process each slice independently and concatenate features to form the focus volume, as seen in \cite{hazirbas2019deep, lu2021self}, while 3D ED models, such as \cite{wang2021bridging}, directly apply 3D convolutions to capture spatial and focus-level correlations simultaneously. The second stage estimates depth from the focus volume, often replacing the non-differentiable $\arg\max$ with a differentiable soft $\arg\max$ \cite{kendall2017end}, where probabilities computed via softmax yield smooth depth regression. Multi-scale aggregation and uncertainty-aware formulations further improve prediction accuracy \cite{yang2022deep, won2022learning}. Recent studies also integrate priors from monocular depth estimation \cite{ganj2025hybriddepth} and employ transformer-based architectures such as Swin-Transformer \cite{dogan2025swin} for enhanced global context and frequency-aware focus representation. Additionally, a recent study \cite{ashfaq2026robust} incorporated multiple traditionally computed multi-scale focus volumes into a deep network for iterative depth extraction and refinement. Overall, deep methods \cite{yang2023aberration, fujimura2024deep} overcome many limitations of traditional techniques by enabling end-to-end learning and improved robustness to noise.

\section{Motivation}

\begin{figure}[hbtp]
    \centering
    \includegraphics[width=3.3in]{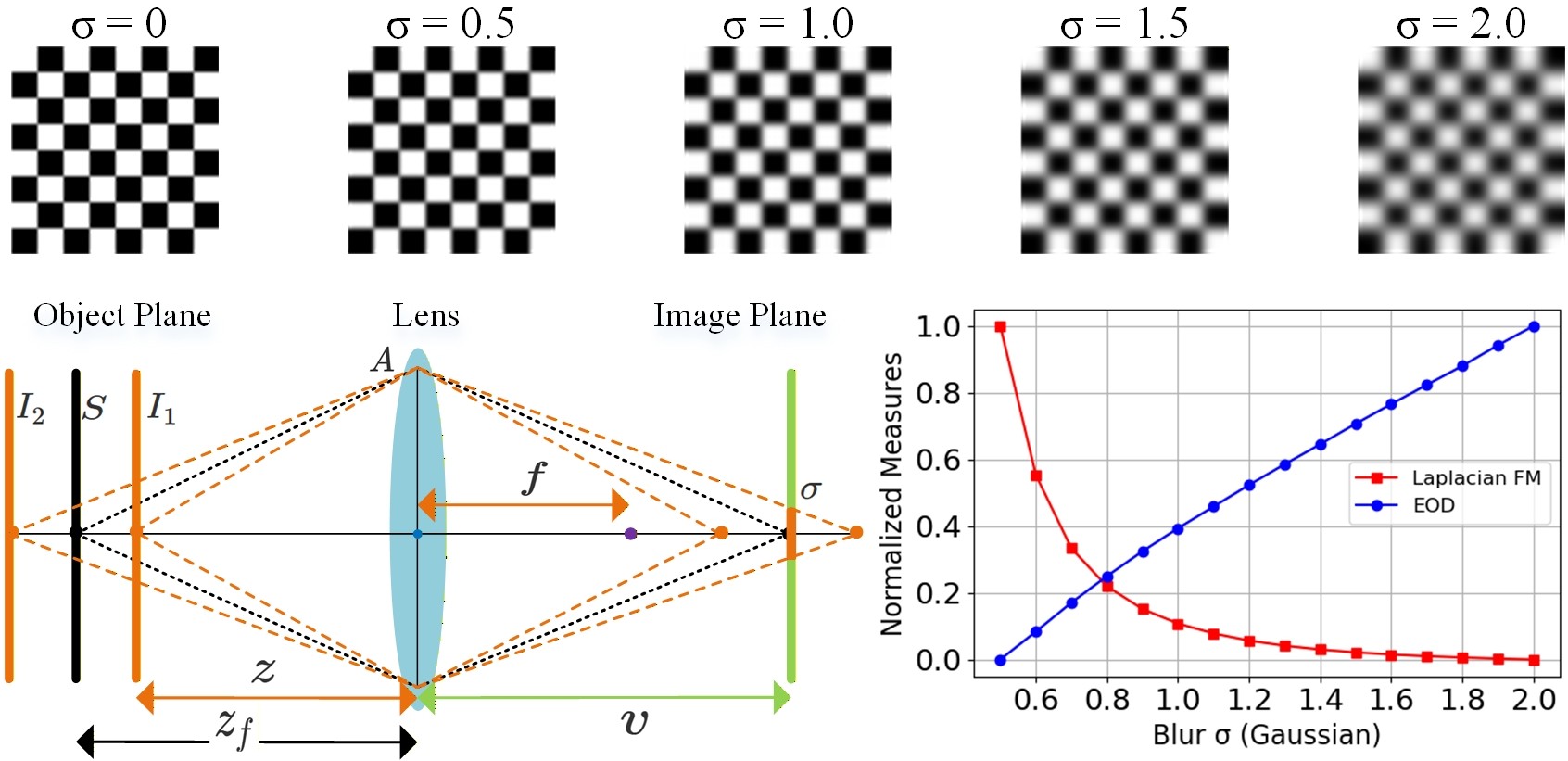}
    \caption{First Row: A synthetic AiF image of size $64 \times 64$ is generated and defocused images by convolving with a Gaussian with $\sigma \in\left\{0.5,1.0,1.5,2.0 \right\}$. Second Row: (left) Illustration of defocused image formation, (right) behavior of EOD and Laplacian focus measure (FM).}
    \label{sigmaEOD}
\end{figure}

\begin{figure*}[t]
    \centering
    \includegraphics[width=6.5in]{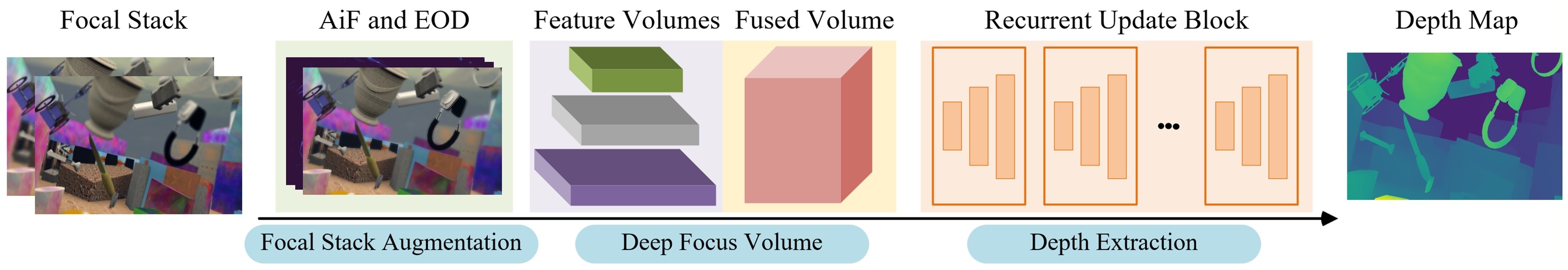}
    \caption{The proposed framework has three modules: Focal Stack Augmentation, Deep Focus Volume, and Depth Extraction.}
    \label{mainMethod}
\end{figure*}
When a point on the object plane is out of focus, the corresponding light rays converge either in front of or behind the image plane forming a circle of confusion (CoC) with radius $\sigma$, as shown in the second row (left)\Cref{sigmaEOD}. The value of $\sigma$ quantifies the amount of defocus: smaller values indicate sharp, in-focus regions, while larger values correspond to stronger blur \cite{gur2019single, wijayasingha2024camera}.  According to the thin-lens law, the relationship between $\sigma$, scene depth, and camera parameters is given by \cite{si2023fully};
\begin{equation}\label{eq:sigmaDepth}
\sigma(p)=  \left|\frac{ z(p) - z_f}{z(p)} \right| \cdot  C
\end{equation}
where $z(p)$ denotes the scene depth at pixel $(p)$, $z_f$ is the focus distance,  and $C$ is the term containing camera parameters.  An AiF image $S(p)$ and a defocused image $I(p)$ are related as;
\begin{equation}\label{eq_img_from}
I(p) = S(p) \circledast h(p;\sigma(p)),
\end{equation}
where $\circledast$ is the convolution and $h(p;\sigma(p))$ is Point Spread Function (PSF), typically approximated by a Gaussian function. We define the Energy of Difference (EOD) map as;
\begin{equation}
\label{eq_EOD}
E(p)=|I(p)-S(p)|^2.
\end{equation}
\begin{proposition}: The energy of difference $E(p)$  is directly proportional to the blur level $\sigma$ and inversely proportional to the focus measure (FM) operator.
\end{proposition}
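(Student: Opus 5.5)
The approach is a local, small-blur expansion of the image formation model \eqref{eq_img_from}. Assume the PSF $h(\cdot;\sigma)$ is a normalized isotropic Gaussian and that $\sigma$ is locally constant around $p$. Under these assumptions the defocused image is the solution of the heat equation at time $t=\sigma^2/2$ with initial condition $S$. A second-order Taylor expansion of $S$ inside the convolution integral then gives
\begin{equation*}
I(p) = S(p) + \frac{\sigma^2}{2}\,\nabla^2 S(p) + O(\sigma^4).
\end{equation*}
The two key moment facts are that the Gaussian has zero first moments, so the gradient term vanishes, and that its second moments equal $\sigma^2$. Substituting into \eqref{eq_EOD} yields the central identity
\begin{equation*}
E(p) \approx \frac{\sigma^4}{4}\,\big(\nabla^2 S(p)\big)^2 .
\end{equation*}
This expression will serve both halves of the proposition.

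For the first claim, fix the scene content $S$ near $p$. The identity shows that $E(p)$ increases monotonically with $\sigma$, as $\sigma^4$ in the small-blur regime. I would read ``directly proportional'' as this monotone growth rather than literal linearity. Through \eqref{eq:sigmaDepth}, this growth also ties $E$ to $|z(p)-z_f|/z(p)$. For larger $\sigma$, where the Taylor expansion is no longer valid, I would use the Fourier domain instead. There
\begin{equation*}
\widehat{I-S}(\omega)=\big(e^{-\sigma^2|\omega|^2/2}-1\big)\hat S(\omega),
\end{equation*}
and the magnitude of the factor $e^{-\sigma^2|\omega|^2/2}-1$ is increasing in $\sigma$ at every frequency. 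By Parseval, the windowed energy of the difference is therefore nondecreasing in $\sigma$. This gives the monotonicity for all $\sigma$, which the first claim needs; the pointwise version follows only under a local-window reading of $E(p)$.

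For the second claim, take the Laplacian focus measure $\mathrm{FM}(I)=(\nabla^2 I)^2$, matching \cref{sigmaEOD}. Differentiating the expansion gives $\nabla^2 I = \nabla^2 S + \tfrac{\sigma^2}{2}\nabla^4 S+\dots$. In the Fourier domain, $|\widehat{\nabla^2 I}|^2 = |\omega|^4 e^{-\sigma^2|\omega|^2}|\hat S|^2$, which is strictly decreasing in $\sigma$. So the focus measure decreases in $\sigma$ while $E$ increases in $\sigma$. Eliminating $\sigma$ then shows that, for a fixed texture, $E$ is a decreasing function of FM; this is the claimed inverse relation.

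The main obstacle is that ``proportional'' cannot hold literally. The Taylor result scales as $\sigma^4$, not $\sigma$, and $E$ depends on the local texture through $(\nabla^2 S)^2$. The statement therefore has to be established as a monotone (order-preserving, respectively order-reversing) relation for fixed $S$. I would state the Gaussian and locally-constant-$\sigma$ assumptions explicitly. I would also use the window-energy (Parseval) version to avoid pointwise sign issues in the Taylor argument.
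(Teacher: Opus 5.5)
Your Fourier step is the paper's proof. The paper writes $\mathcal{D}=\mathcal{S}(1-\mathcal{H})$, integrates $|\mathcal{S}|^2(1-e^{-2\pi^2\sigma^2 w})^2$ over the whole frequency plane, and concludes that this is increasing in $\sigma$ and zero at $\sigma=0$. That is all it proves: the inverse relation to FM is only asserted afterwards ($FM\propto 1/\sigma$) and illustrated by Fig.~\ref{sigmaEOD}. Your small-blur expansion $E(p)\approx\frac{\sigma^4}{4}(\nabla^2 S(p))^2$ and your computation $|\widehat{\nabla^2 I}|^2=|\omega|^4e^{-\sigma^2|\omega|^2}|\hat S|^2$ go beyond the paper. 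Your remark that ``proportional'' can only mean a monotone relation for fixed texture is more careful than the paper's wording.

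The step that fails is the claim that Parseval makes the \emph{windowed} energy nondecreasing in $\sigma$, which you use to get a local statement for all $\sigma$. Parseval diagonalizes only the energy over the whole image. Multiplying by a window convolves in frequency and destroys the factor-by-factor comparison. Concretely, let $S$ be the indicator of the annulus $r_1\le|x|\le r_2$ and take $p=0$. Then $I(0)=e^{-r_1^2/2\sigma^2}-e^{-r_2^2/2\sigma^2}$, which tends to $0$ both as $\sigma\to0$ and as $\sigma\to\infty$ and is positive in between. So $E(0)$, and the energy over any small window around $0$, first grows and then decays.

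The FM half has the same defect. The per-frequency decrease gives only a decreasing total $\|\nabla^2 I\|_2^2$. Pointwise, your own expansion shows that where $\nabla^2 S(p)=0$ but $\nabla^4 S(p)\neq0$, one has $\nabla^2 I(p)\approx\frac{\sigma^2}{2}\nabla^4 S(p)$, so the Laplacian FM \emph{increases} with blur there. Also, the convolution theorem needs a shift-invariant PSF, i.e.\ $\sigma$ constant over the whole image, not just near $p$.

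What your argument, like the paper's, actually establishes is this. For uniform blur, the total EOD energy is increasing in $\sigma$ and the total Laplacian energy is decreasing, so the former is a decreasing function of the latter. Pointwise, monotonicity holds only asymptotically for small $\sigma$, at pixels with $\nabla^2 S(p)\neq0$. State the result at that level rather than claiming the local version for all $\sigma$.
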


\begin{proof} 
Taking the 2D Fourier transform of $S(p)$,  $I(p)$, and $h(p; \sigma(p)$ and from \Cref{eq_img_from} can be written as;
\begin{align*}
\mathcal{I}(u,v) = \mathcal{S}(u,v)\mathcal{H}(u,v).
 \end{align*}
Computing the difference between $\mathcal{I}(u,v)$ and $\mathcal{S}(u,v)$,
\begin{align*}
\mathcal{D}(u,v) =  \mathcal{I}(u,v)- \mathcal{S}(u,v) = \mathcal{S}(u,v) \big(1 - \mathcal{H}(u,v)\big).
 \end{align*}
By substituting  $\mathcal{H}(u,v)$ with  $w=(u^2+v^2)$;
\begin{align*}
\mathcal{E}(u,v;\sigma) &=  \iint |\mathcal{S}(u,v)|^2 \, \big|1 - \mathcal{H}(u,v)\big|^2 \, dudv,\\
&= \iint |\mathcal{S}(u,v)|^2 \left(1 - e^{(-2\pi^2 \sigma^2 w)}\right)^2 \, dudv,
\end{align*}
Hence, $\mathcal{E}(u,v;\sigma)$ increases with $\sigma$;  when $\sigma = 0$ no blur is there and $\mathcal{E}(u,v;0)=0$.
\end{proof}

The relationship $E(p) \propto \sigma(p)$ and $FM(p) \propto 1/\sigma(p)$, hence $E(p) \propto 1/FM(p)$, is illustrated in the second row (right) of Fig.~\ref{sigmaEOD}, where a synthetic AiF image is progressively blurred with Gaussian kernels. As shown, EOD retains sensitivity over a wider range of focal distances (larger $\sigma$), while FM drops rapidly with small increases in $\sigma$, limiting its effective range. Consequently, conventional SFF methods require larger focal stacks to capture portions of $\sigma$'s or depth variations. In contrast, EOD can encode a broader $\sigma$ range, making it an effective auxiliary cue for deep models in depth estimation.

The limited depth of field (DoF) of large-aperture lenses prevents direct acquisition of AiF images from imaging systems. They are often estimated from focal stacks in SFF methods \cite{jiang2024learning} as byproducts or used as internal supervisory signals in monocular depth estimation \cite{gur2019single, wijayasingha2024camera}. Providing an estimated AiF image as auxiliary cue, together with defocus, images as input to a deep model can further enhance accurate depth learning. Therefore, we propose a compact focal stack consisting of two or three images, combined with EOD maps and AiF images, for SFF frameworks.

\section{Proposed Framework}

\begin{figure*}[h]
    \centering
    \includegraphics[width=6.9 in]{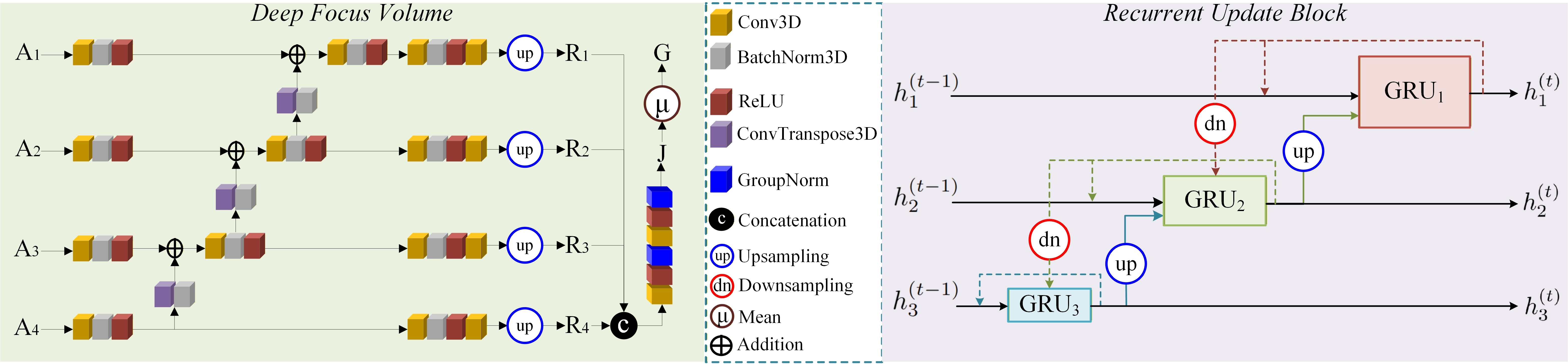}
    \caption{\textbf{Left}: The architecture of the DFV module. For brevity, the feature encoder from which the feature volumes 
    ${\{\mathbf{A}_1, \mathbf{A}_2, \mathbf{A}_3, \mathbf{A}_4\}}$ are extracted is not shown. 
    \textbf{Right}: The recurrent update block, where GRUs operating at coarse, 
    medium, and fine scales exchange multi-scale information among themselves.}
    \label{DFV_GRU_fig}
\end{figure*}

The proposed framework consists of three modules: Focal Stack Augmentation, Deep Focus Volume, and Depth Extraction as depicted in \Cref{mainMethod}.

\subsection{Focal Stack Augmentation}

Given two or more defocused images $I_{z,c}(p)$ where $z\in\{1,2,3\}$ and $c\in\{R,G,B\}$, we first estimate an AiF image using a simple weighted sum method\cite{huang2007evaluation}. For each slice, we compute directional Laplacian responses along four orientations, then aggregate their magnitudes to obtain a per-pixel focus measure as
\begin{equation}
   F_{z,c}(p) = \sum_\theta| I_{z,c}(p) \circledast K_{\theta}|, 
\end{equation}
where $\circledast$ is the convolution operation, $K_{\theta}$ are Laplacian kernel in the directions $\theta \in \{0^{\circ}, 45^{\circ}, 90^{\circ}, 135^{\circ}\}$. The Aif $\hat{S}_{c}(p)$ and EOD maps are computed as;
\begin{align}
\hat{S}_{c}(p) &=\; \sum_{z} w_{z,c}(p)\cdot I_{z,c}(p),\\
E_{z,c}(p) &=|I_{z,c}(p) - \hat{S}_c(p)|^2. 
\end{align}
where $w_{z,c}(x,y)$ weights are computed by applying softmax on the focus measures $F_{z,c}(p)$. The augmented  input stack is obtained by concatenating the auxiliary cues  $E_{z,c}(p)$, $\hat{S}_{c}(p)$ with  images;
\begin{equation}
    \bar{I}_{z,c}(p) = \text{cat}\big(I_{z,c}(p),\hat{S}_c(p), E_{z,c}(p) \big).
\end{equation}

\subsection{Deep Focus Volume}
The augmented stack $\bar{\mathbf{I}}\in\mathbb{R}^{Z\times C\times H\times W}$ is first processed by a ResNet18-based encoder~\cite{he2016deep} pretrained on ImageNet-1K~\cite{deng2009imagenet}, which generates four dense feature volumes $\{\mathbf{A}_1, \mathbf{A}_2, \mathbf{A}_3, \mathbf{A}_4\}$ at different resolutions. Each $\mathbf{A}_n$ is refined through a residual decoder based on 3D convolutional operations that capture spatial and slice-wise context. The resulting decoded features $\{\mathbf{R}_1, \mathbf{R}_2, \mathbf{R}_3, \mathbf{R}_4\}$, where each $\mathbf{R}_n \in \mathbb{R}^{Z\times1\times H\times W}$, are upsampled to the input resolution for spatial alignment and also for extracting initial depth for GRU based refinement. To produce a fused representation independent of the number of slices $Q$, the decoded volumes are concatenated along the slice dimension and projected into a fixed-dimensional feature space using 3D convolutions to capture inter-slice dependencies, yielding $\mathbf{J}\in\mathbb{R}^{Z\times C\times H\times W}$. The final fused feature map $\mathbf{G} \in \mathbb{R}^{C\times H\times W}$ is obtained by averaging $\mathbf{J}$ across slices, resulting in a consistent feature representation with channel dimension $C=64$. Since $\mathbf{G}$ maintains a fixed channel dimension, the model can be trained and inferred on input stacks with arbitrary numbers of slices.

\subsection{Depth Extraction}

The final module progressively refines an initial estimate through recurrent updates. The process starts from an initial depth map $D_0$, derived from the fused volume $\mathbf{G}$ using a soft-argmax operation~\cite{kendall2017end}. The refinement is carried out by a hierarchy of $K=3$ convolutional GRU (ConvGRU) units, each operating at a distinct spatial scale. Let ${D}_t$ denote the predicted depth at iteration $t$, and $h_k^{(t)} \in \mathbb{R}^{C_h\times H_k\times W_k}$ the hidden state of the $k$-th GRU layer ($C_h=128$). The hidden state is updated as
\begin{equation}
h_k^{(t)} = \mathrm{GRU_k}\big(h_k^{(t-1)}, \mathbf{x}_k^{(t)}\big),
\label{eq:gru_update}
\end{equation}
where $\mathbf{x}_{k}^{(t)}$ is the input to $\mathrm{GRU}_k$. Each GRU follows the standard gating mechanism with convolutional operations for the update, reset, and candidate states. The refinement proceeds from the coarsest $(k=3)$ to the finest $(k=1)$ scale, with hidden states propagated upward. Intermediate levels blend features interpolated from coarser scales with locally pooled fine-scale information, as illustrated in \Cref{DFV_GRU_fig}. The finest GRU receives additional fused focus–depth features $\mathbf{B}^{(t)}$, obtained by combining the fused focus volume $\mathbf{G}$ with the previous depth estimate ${D}_{t-1}$ using a lightweight convolutional block. Thus, $\mathrm{GRU}_1$ refines depth using both multi-scale context and fused focus–depth cues. At each iteration, the hidden state $h_1^{(t)}$ produces a depth residual $\Delta D_t$ through two successive $3\times3$ convolutions followed by ReLU activation. The updated depth is
\begin{equation}
{D}_{t} = {D}_{t-1} + \Delta {D}_t,
\label{eq:depth_update}
\end{equation}
which progressively refines the estimate over iterations. Since refinement operates at reduced resolution, a learnable upsampling~\cite{teed2020raft,lipson2021raft} reconstructs the full-resolution depth map $\hat{D}_t$. After $T$ iterations, the final high-resolution prediction $\hat{D}_T$ is obtained.

\subsection{Loss}
During training, each intermediate depth estimate $\hat{D}_t$ is supervised with the ground truth depth map ${D}'$ using a mean squared error loss with progressively increasing weights across iterations as:
\begin{equation}
    \mathcal{L} = \sum_{t=1}^{T} \alpha^{T-t}\,\big({D}' - \hat{D}_t\bigr)^2,
\end{equation}
where we set $\alpha = 0.9$. At inference, we simply take the final iteration result $\hat{D}_T$.

\section{Results and Discussion}

\subsection{Experimental Setup}

Our model has a total of 8.7M trainable parameters. The implementation is done in PyTorch \cite{paszke2019pytorch}. The model is optimized using the AdamW optimizer \cite{loshchilov2017decoupled}. The maximum learning rate is set to $1\times10^{-4}$ and is scheduled with a one-cycle learning rate policy. The model employs a recurrent refinement process with 4 iterative updates, where the inputs are downsampled by a factor of 2 before being passed into the GRU layers. It contains 3 GRU layers in total, with the first operating at half of the original resolution, the second at two times lower resolution relative to the first, and the third at four times lower resolution relative to the first. Each GRU layer is defined with a hidden dimension of 128. For evaluation and ablation, we use four diverse datasets: two synthetic, FT \cite{mayer2016large}, and FoD \cite{maximov2020focus}, and two real-world, DDFF \cite{hazirbas2019deep} and Mobile Depth \cite{suwajanakorn2015depth}. Dataset-specific training configurations are provided in \Cref{dataset_config}, while the Mobile dataset is used exclusively for zero-shot generalization.

\begin{table}[htbp]
  \centering
  \caption{Training settings for each dataset, including focal stack size, number of input images used, slice indices, and patch size.}
    \begin{tabular}{ccccc}
     \toprule
    Dataset & Size & Input & Indices & Patch \\
    \hline
    FT    & 15    & 2 ; 3 & [5,11] ; [4,8,12] & 512x512 \\
    \hline
    FOD   & 5     & 2 ; 3 & [2,4] ; [1,3,5] & 256x256 \\
    \hline
    DDFF  & 10    & 2 ; 3 & [4,7] ; [3,6,8] & 224x224 \\
    \bottomrule
    \end{tabular}%
  \label{dataset_config}%
\end{table}%

\subsection{EOD Analysis}
\begin{figure}[hbtp]
    \centering
    \includegraphics[width=3.3in]{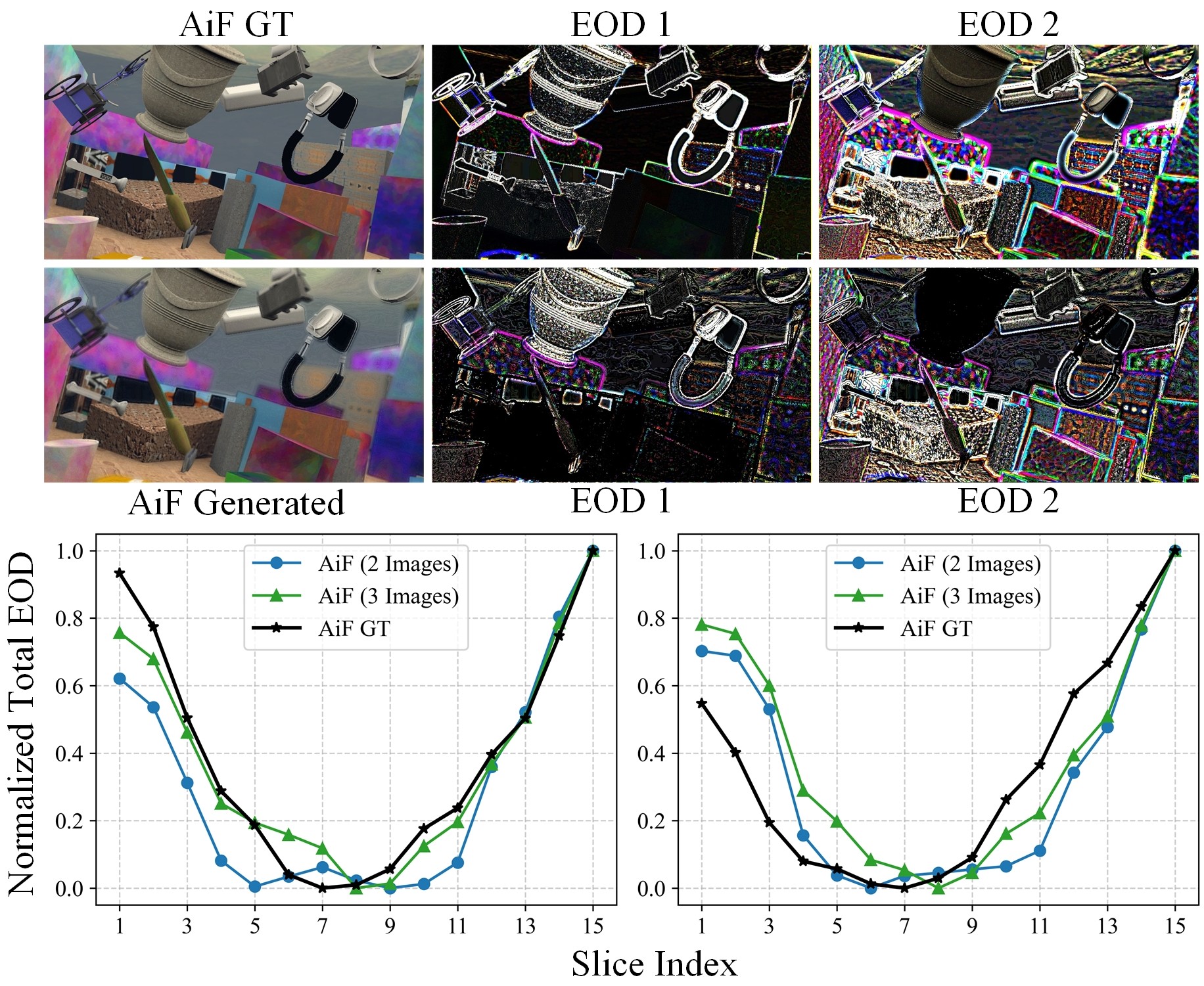}
    \caption{EOD visualization: (Row:1) GT AiF and corresponding EOD maps, (Row:2) Estimated AiF and corresponding EOD maps, (Row:3) EOD computed across the focal stack for AiFs estimated using 2 and 3 slices compared with the GT AiF.}

    \label{EOD_Fig}
\end{figure}

To evaluate the quality of the synthesized AiF image, we perform an analysis using an image pair ([5, 11]) from the FT dataset. The ground truth AiF and the AiF generated by our method, together with their corresponding EOD maps, are shown in \Cref{EOD_Fig} (rows 1–2). Despite the artifacts in the estimated AiF relative to the ground truth, the EOD map visualizations demonstrate that it effectively captures the blur variations across the input images. Figure \ref{EOD_Fig} (row 3) presents the EOD computed across the two focal stacks using the ground truth AiF and the AiFs estimated from 2 and 3 slices. As expected, the EOD is minimal at the central image of the stack and increases as we move farther from the center. Although the EODs from the estimated AiFs deviate from the ground truth, they exhibit a similar overall trend.

\begin{table}[htbp]
  \centering
  \caption{EOD ablation with different augmentation settings.}
  \resizebox{0.47\textwidth}{!}{%
    \begin{tabular}{cccccc}
    \toprule
    Ablation & MAE $\downarrow$   & RMS $\downarrow$   & SqRel $\downarrow$ & $\delta$ $\uparrow$ & $\delta^2$ $\uparrow$ \\
    \hline
    Imgs  & 6.61  & 11.82 & 9.05  & 83.22 & 87.39 \\
    AiF+Imgs & 6.22  & 11.30 & 7.81  & 84.81 & 87.74 \\
    AiF+EODs & 6.91  & 12.06 & 9.06  & 83.30 & 87.36 \\
    Imgs+EODs & 6.32  & 11.40 & 7.95  & 84.42 & 87.65 \\
    Imgs+AiF+EODs & 6.18  & 11.26 & 7.68  & 84.64 & 87.72 \\
    \bottomrule
    \end{tabular}%
    }
  \label{EOD_Ablation_Table}%
\end{table}%

To evaluate the effect of the proposed augmentations and understand the contribution of AiF, we conducted an ablation study. For a clearer assessment of AiF, we used the GT AiF during evaluation. As shown in \Cref{EOD_Ablation_Table}, using only the input images serves as the baseline. Adding the GT AiF improves all metrics, showing its benefit in preserving focus continuity. Using AiF together with only the EODs leads to a slight drop due to limited spatial cues, while combining EODs with the input images recovers the accuracy. The best performance is achieved when all three components (Imgs + AiF + EODs) are used jointly. The qualitative results in \Cref{EOD_Ab_Fig} support this pattern. Using only the images leads to loss of scene detail, while combining all components produces clearer structure and lower error values.

\begin{figure}[h]
\centering
\includegraphics[width=3.3in]{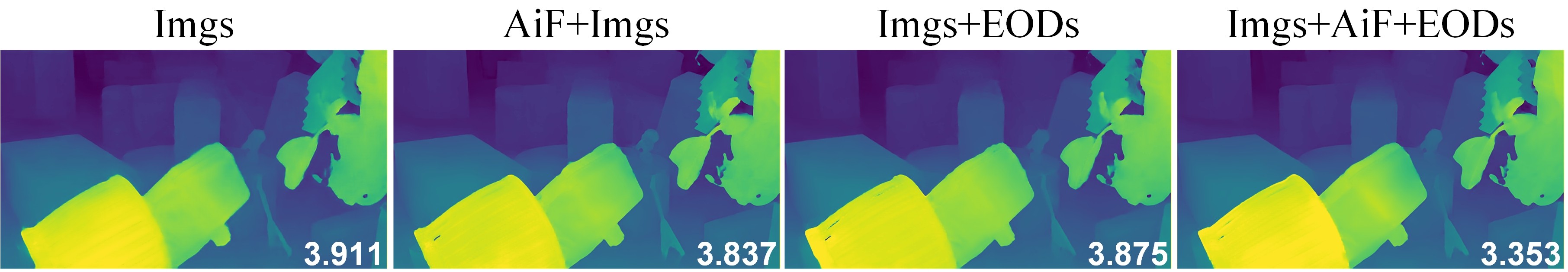}
\caption{Qualitative Results for EOD Ablation. The numbers on each map denote RMS error.}
\label{EOD_Ab_Fig}
\end{figure}

\begin{table*}[htbp]
  \centering
 \caption{Quantitative comparison on FT and FOD datasets. The upper part shows baseline results using full focal stacks, while the lower part shows performance with the proposed EOD augmentation.}
  \resizebox{1\textwidth}{!}{%
    \begin{tabular}{cccccccc|ccccccc}
    \toprule
          & \multicolumn{7}{c|}{FT}                                & \multicolumn{7}{c}{FOD} \\
          \cline{2-8}\cline{9-15}
    Method      & Input  & MAE $\downarrow$   & RMS $\downarrow$   & AbsRel $\downarrow$ & $\delta$ $\uparrow$ & $\delta^2$ $\uparrow$ & $\delta^3$ $\uparrow$ & Input  & MAE $\downarrow$   & RMS $\downarrow$   & AbsRel $\downarrow$ & $\delta$ $\uparrow$ & $\delta^2$ $\uparrow$ & $\delta^3$ $\uparrow$ \\
          \hline
          & \multicolumn{14}{c}{Using Full Focal Stack} \\
          \hline
    AiFDNet & 15    & 6.812 & 13.14 & 0.731 & 85.43 & 87.67 & 88.87 & 5     & \textbf{0.071} & 0.156 & \textbf{0.118} & \textbf{84.75} & \textbf{94.67} & 97.46 \\
    DFV-FV & 15    & 6.332 & 12.09 & 0.899 & 85.09 & 87.60 & 89.52 & 5     & 0.078 & 0.160 & 0.142 & 80.93 & 94.19 & \textbf{97.57} \\
    DFV-Diff & 15    & 5.509 & 10.65 & 0.615 & 86.18 & 88.09 & 89.93 & 5     & 0.077 & 0.165 & 0.133 & 81.77 & 93.98 & 97.43 \\
    DWild & 15    & 5.542 & 10.44 & 0.611 & 86.35 & 88.21 & 89.84 & 5     & 0.073  & \textbf{0.154}  & 0.138  & 83.12  & 94.45 & 97.26 \\
    
    \hline
          & \multicolumn{14}{c}{Using Proposed Augmented Stack} \\
          \hline
  
  AiFDNet   & 3     & 6.040 & 11.35 & 0.654 & 85.18 & 87.82 & 89.72 & 3     & 0.084 & 0.167 & 0.164 & 77.64 & 91.89 & 96.13 \\

   DFV-FV   & 3     & 5.857 & 10.95 & 0.639 & 85.54 & 87.97 & 89.85 & 3     & 0.094 & 0.176 & 0.189 & 72.82 & 91.38 & 96.47 \\
 
  DFV-Diff  & 3     & 5.909 & 11.04 & 0.640 & 85.54 & 87.95 & 89.82 & 3     & 0.086 & 0.169 & 0.163 & 77.78 & 92.72 & 96.68 \\
  Ours     & 3     & \textbf{1.776} & \textbf{4.037} & \textbf{0.059} & \textbf{97.95} & \textbf{99.26} & \textbf{99.55} & 3     & 0.079 & 0.154 & 0.157 & 79.15 & 93.94 & 97.36 \\
 \hline
   AiFDNet   & 2     & 6.227 & 11.57 & 0.669 & 84.96 & 87.65 & 89.66 & 2     & 0.110 & 0.196 & 0.235 & 67.01 & 87.99 & 94.98 \\
    DFV-FV   & 2     & 6.104 & 11.28 & 0.658 & 85.15 & 87.84 & 89.77 & 2     & 0.110 & 0.195 & 0.225 & 63.01 & 87.57 & 95.54 \\
     DFV-Diff  & 2     & 6.076 & 11.35 & 0.656 & 85.23 & 87.84 & 89.75 & 2     & 0.113 & 0.195 & 0.240 & 61.76 & 87.12 & 95.59 \\
     Ours     & 2     & 2.112 & 4.651 & 0.075 & 97.21 & 99.00 & 99.37 & 2     & 0.102 & 0.183 & 0.196 & 68.76 & 90.41 & 96.46 \\
   
          \bottomrule
    \end{tabular}%
     }
  \label{EOD_effect_table}%
\end{table*}%

\begin{table}[htbp]
  \centering
  \caption{Results on DDFF Validation set using the proposed augmentation}
  \resizebox{0.45\textwidth}{!}{%
    \begin{tabular}{ccrrrr}
    \toprule
      Method    & Input & \multicolumn{1}{c}{MAE $\downarrow$} & \multicolumn{1}{c}{RMS $\downarrow$} & \multicolumn{1}{c}{$\delta$ $\uparrow$} & \multicolumn{1}{c}{$\delta^2$ $\uparrow$} \\
         \hline
   
    AiFDNet     & 3     & \textbf{0.0028} & 0.0084 & 98.04 & 98.63 \\
    DFV-FV      & 3     & 0.0035 & 0.0072 & 97.75 & 99.01 \\
    DFV-Diff    & 3     & 0.0040 & 0.0076 & 97.68 & 99.03 \\
     Ours       & 3     & 0.0031 & \textbf{0.0065} & \textbf{98.60} & \textbf{99.39} \\
   \hline 
    AiFDNet     & 2     & 0.0028 & 0.0082 & 98.12 & 98.71 \\
    DFV-FV      & 2     & 0.0036 & 0.0074 & 97.64 & 98.93 \\ 
    DFV-Diff    & 2     & 0.0036 & 0.0073 & 97.64 & 98.98 \\
    Ours        & 2     & 0.0031 & 0.0065 & 98.52 & 99.36 \\
   
           \bottomrule
    \end{tabular}%
    }
  \label{DDFF_EOD_table}%
\end{table}%

\begin{figure}[t]
    \centering
    \includegraphics[width=3.3in]{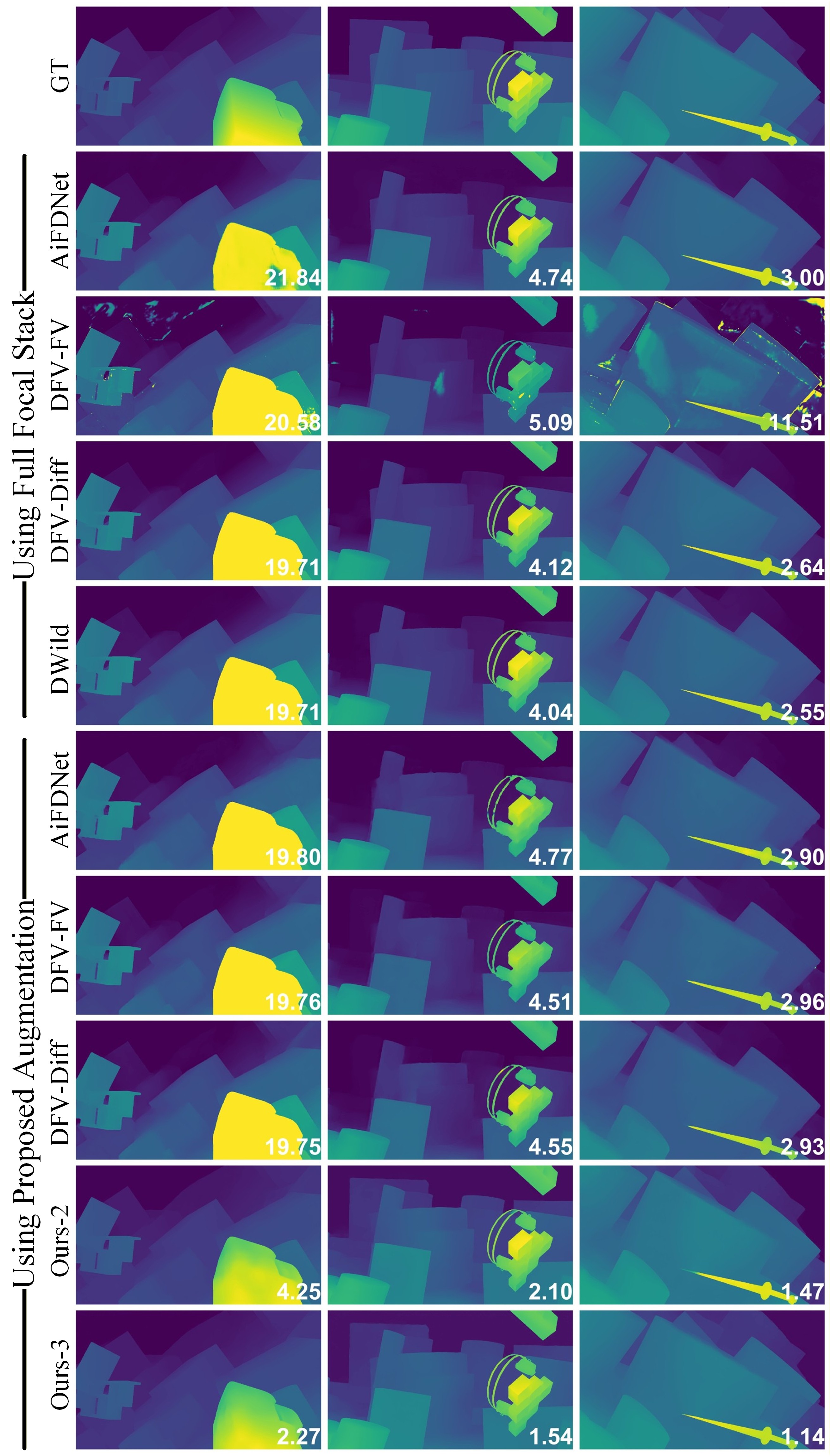}
    \caption{Qualitative results on the FT dataset. Comparison between baseline models trained with full focal stacks and models using the proposed EOD augmentation. Numbers on each map denote the corresponding RMS error with respect to the GT.}
    \label{FT_qual}
\end{figure}

\begin{figure*}[hbtp]
    \centering
    \includegraphics[width=6.9in]{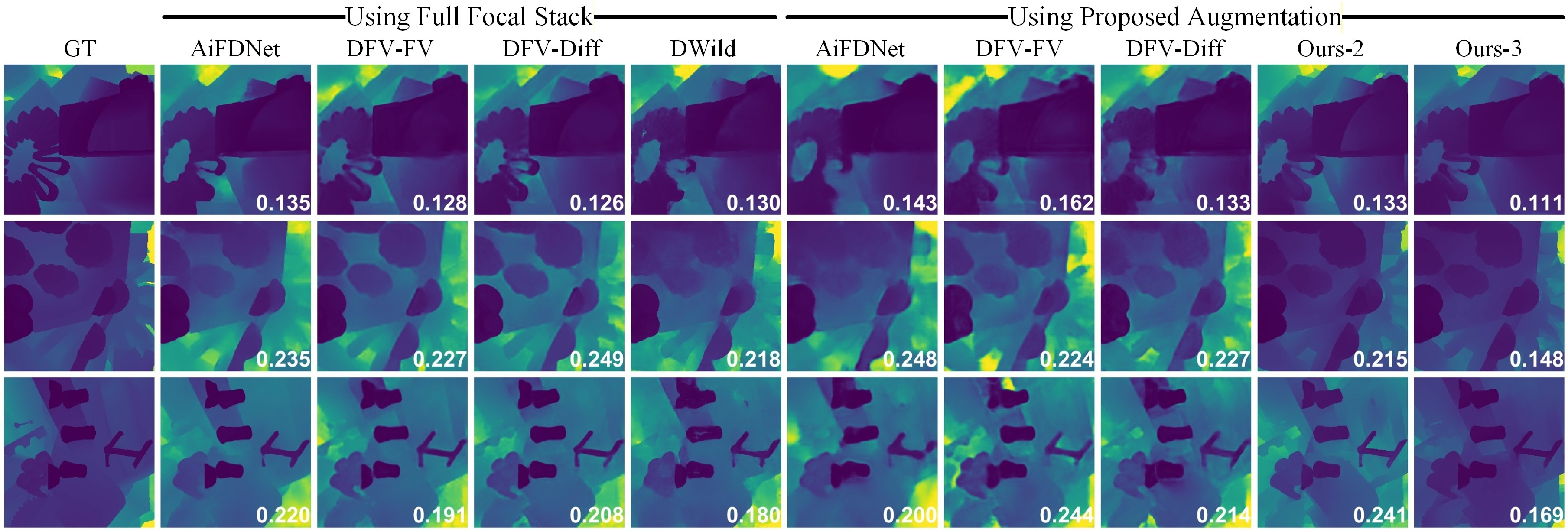}
    \caption{Qualitative results on the FOD dataset. Comparison between baseline models trained with full focal stacks and models using the proposed EOD augmentation. Numbers on each map denote the corresponding RMS error with respect to the GT. Our method produces sharper boundaries; zoom in for details.}
    \label{FOD_qual}
\end{figure*}

\begin{figure}[h]
    \centering
    \includegraphics[width=3.3in]{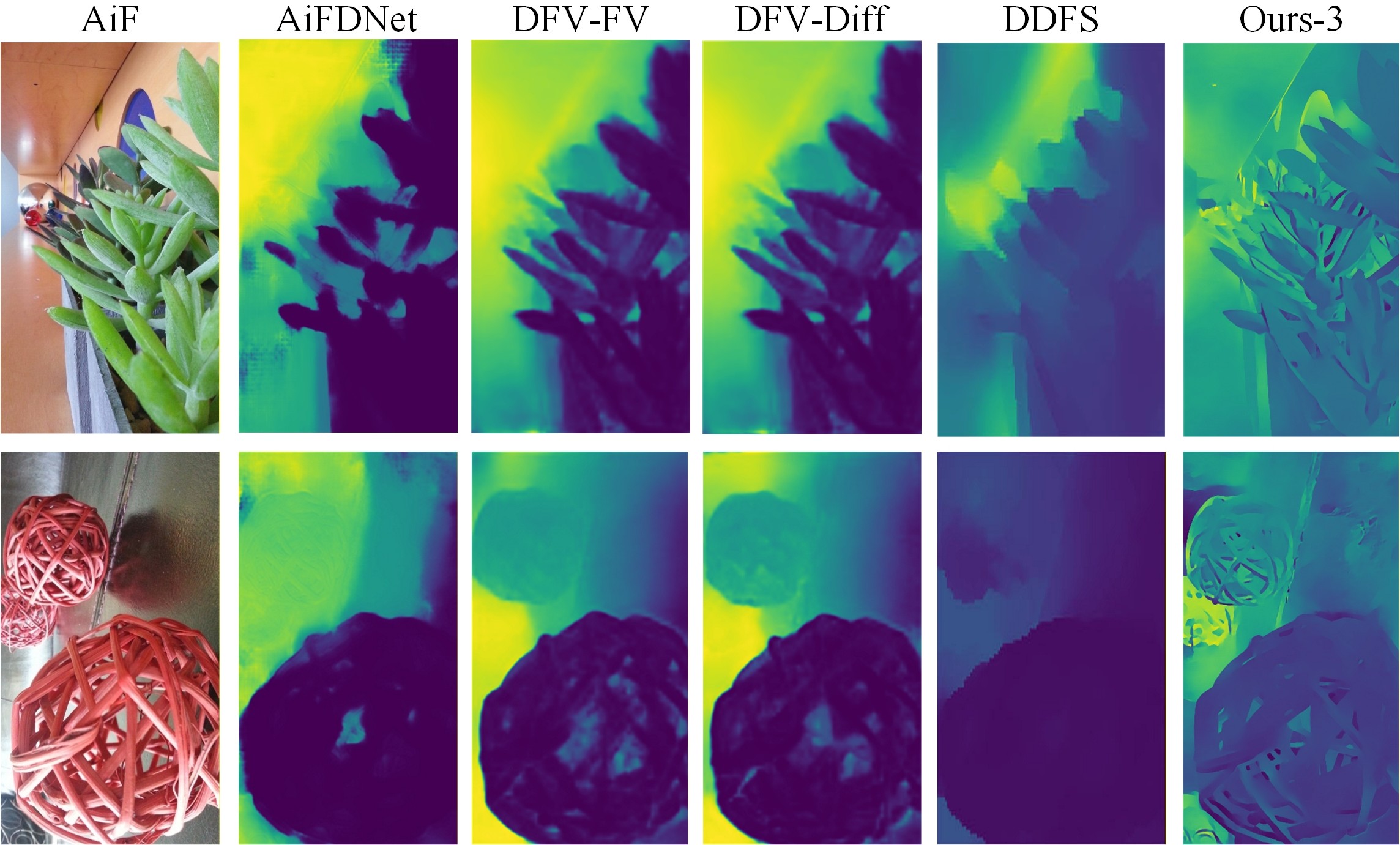}
    \caption{Qualitative comparison on the Mobile dataset. Ours is only from 3 input images.}
    \label{Mobile_qual}
\end{figure}

\subsection{Comparative Analysis}

To evaluate the effectiveness of the proposed EOD augmentation and the model, we integrated the augmentation step into several existing deep SFF frameworks, namely AiFDNet~\cite{wang2021bridging}, DFV-FV, and DFV-Diff~\cite{yang2022deep}, and compared their performance. Each method was retrained using only 2 and 3 input images on the FT and FOD datasets. The quantitative results for both datasets are summarized in \Cref{EOD_effect_table}. For reference, the table also includes the performance of these methods when trained with their full focal stacks (15 images for FT and 5 for FOD), allowing direct comparison of how performance changes when the number of input images is reduced. In addition, we include another state-of-the-art baseline, DWild~\cite{won2022learning}, for further comparison. The lower portion of \Cref{EOD_effect_table} reports results obtained after incorporating the proposed EOD augmentation, while the upper portion lists the original baseline performances using full focal stacks.

As observed on the FT dataset, the performance of AiFDNet and DFV-FV not only remained comparable but even improved when trained with the proposed augmentation using only two or three input images. For DFV-Diff, the performance stayed almost on par with the original across all metrics. The proposed model, however, outperformed all existing methods by a clear margin. A similar trend appears on the FOD dataset. When the augmentation was applied to existing methods, their performance remained highly competitive even with far fewer input images. Moreover, the three-image version of our model achieves lower RMS than most five-image competitors, highlighting the efficiency of the EOD formulation. After rounding, both our method and DWild appear to obtain the lowest RMS. However, using full precision values, DWild is slightly better, which is why it is bolded in the table. It is also worth noting that for the FOD dataset, some competing full stack methods provide official checkpoints trained with additional external data, giving them a potential advantage. Despite this, our model, trained with less data and fewer input images, still achieves competitive performance.

We also evaluate on the DDFF dataset using its evaluation split, as shown in \Cref{DDFF_EOD_table}. Since the official benchmark is no longer active, we train all models on the DDFF training set with the proposed augmentation and report validation results. The results show that with the proposed augmentation, existing methods perform well even with only 2 or 3 input images. Furthermore, our proposed model surpasses all comparative methods in most metrics, with its 2-image variant already achieving a lower RMS error than every baseline, including their 3-image versions.

The qualitative results for the FT dataset are shown in \Cref{FT_qual}. The figure compares baseline models using full focal stacks with their EOD-augmented versions trained on only two input images. As observed, the visual quality remains nearly identical, with AiFDNet and DFV-FV even showing clearer details in certain regions (first and third columns, respectively). Overall, these methods preserve visual consistency despite the drastic reduction from 15 to 2 images. The proposed model’s 2-image and 3-image variants are also included for comparison. These variants produce results that are visually closest to the GT, which is also confirmed by the RMS values shown on each map, as they are the lowest among all compared methods.

Similarly, \Cref{FOD_qual} presents the qualitative results for the FOD dataset. The figure includes the full stack results of the baseline models, the 3 image variants of AiFDNet, DFV-FV, and DFV-Diff with the proposed augmentation, as well as the proposed 2-image and 3-image variants for comparison. Although a slight decline in performance is observed in some cases, likely due to the limited focus cues available in the selected images, the overall results remain visually comparable. While the proposed model shows some contrast variation with respect to the GT depth, it still preserves sharper object boundaries and clearer edges, demonstrating the effectiveness of the iterative refinement process.

Lastly, we evaluate zero-shot generalization on the Mobile dataset~\cite{suwajanakorn2015depth}. A general model was trained sequentially, first on FT, then fine-tuned on FOD, and finally on a small 20-stack synthetic HCI dataset~\cite{honauer2016dataset}. For comparison, we also include the recent DDFS method~\cite{fujimura2024deep}. The qualitative results, shown in \Cref{Mobile_qual}, demonstrate that our approach, using only three input images while the original focal stack contains up to 33 images, achieves clear improvements. The proposed augmentation and method effectively separate background and foreground regions (as seen in the second row of results), showing strong generalization and consistent performance across both focal stacks despite the limited number of inputs.

\subsection{Ablation Study }

\begin{figure}[h]
    \centering
    \includegraphics[width=3.3in]{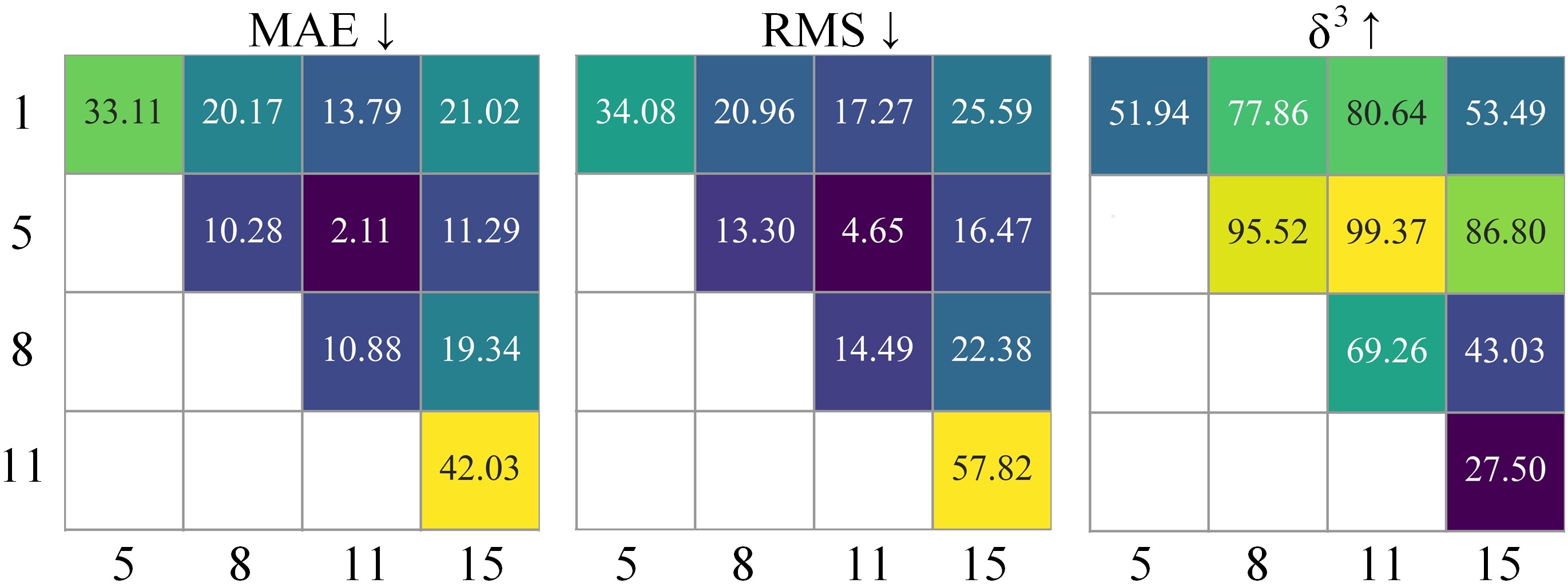}
    \caption{Pair-selection evaluation for the two-image setting. Heatmaps show different metrics for slice pairs; rows denote the first slice index and columns denote the second.}
    \label{pair_selection_fig}
\end{figure}

To evaluate the model’s robustness to PSF variations, we conduct an experiment based on different slice pair selections. The goal is to test whether the model can still extract reliable focus cues and predict accurate depth when the input images have different blur characteristics from those used during training. The model was trained on the FT dataset using the image pair [5, 11], and during testing, the input pairs were changed to other combinations. The resulting heatmaps in \Cref{pair_selection_fig} show that performance is highest when training and testing pairs match ([5, 11]), while pairs containing two similarly defocused images perform poorly due to limited focus variation. In contrast, pairs where one image focuses on the foreground and the other on the background yield notably better results.

\begin{figure}[h]
    \centering
    \includegraphics[width=3.3in]{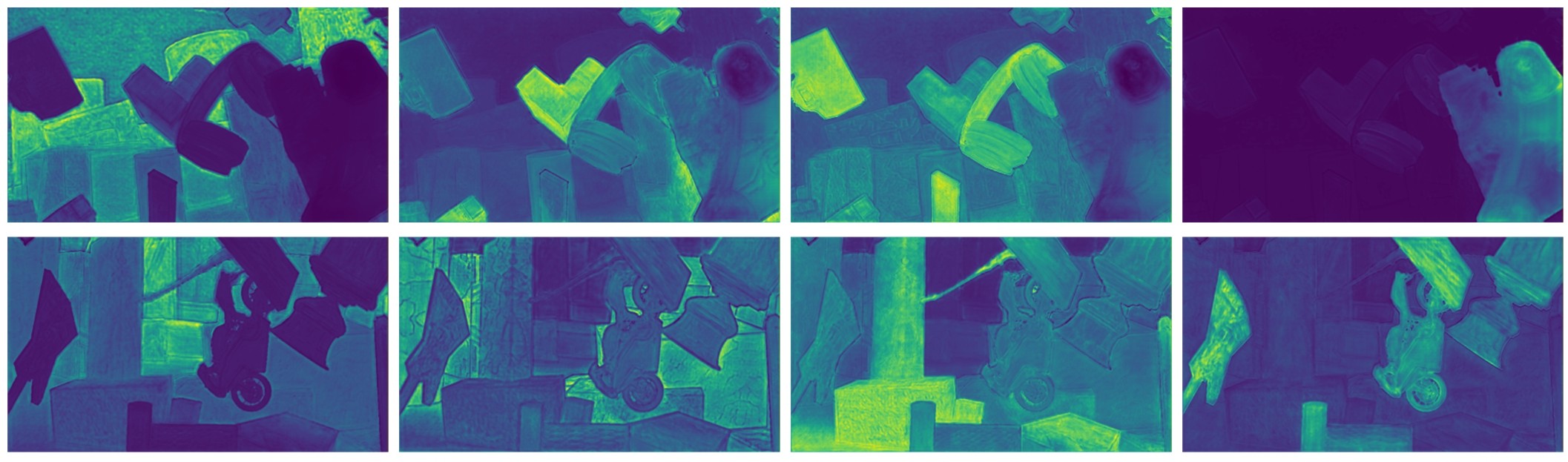}
    \caption{Activation maps across different focus regions. Despite, trained with only two defocused inputs, the model still responds strongly to all depth regions, including weakly focused areas. From left to right: background, far-mid, mid-near, and near focus areas are activated.}
    \label{activation_map_fig}
\end{figure}

To understand how the proposed model responds to regions outside the directly focused areas, we analyze the activation behavior of the fused feature volume $\mathbf{G}$. The model was trained on the FT dataset using the image pair [5, 11], where mid-far and mid-near regions were primarily in focus. We then visually inspect the spatial activation patterns within $\mathbf{G}$ to evaluate whether the model can still respond to weakly focused regions that were not well represented in the training inputs. As shown in \Cref{activation_map_fig}, even though the model was trained with only two defocused images, it exhibits strong activations across the entire scene, including areas that are weakly focused in both inputs. For reference, the four images (from left to right) show activations corresponding to background, far-mid, mid-near, and near regions.

\begin{table}[htbp]
  \centering
  \caption{Comparisons including estimated or GT AiF with images from the FT dataset.}
  \resizebox{0.45\textwidth}{!}{%
    \begin{tabular}{ccrrrrr}
    \toprule
      AiF    & Input & MAE $\downarrow$ & RMS $\downarrow$ & SqRel$\downarrow$ & $\delta\uparrow$ & $\delta^3\uparrow$ \\
         \hline
   
    Estimated AiF & 3   & 1.78     & 4.08 & 1.18 & 97.95  & 99.55    \\
    GT AiF       &  3   & 1.48     & 3.30 & 0.56 &  98.56 &  99.68   \\
    \hline 
    Estimated AiF  & 2   & 2.18    & 4.65 & 1.61 &  97.21  &  99.37  \\
    GT AiF    &    2     & 1.68    & 3.64 & 0.60  &  98.39  &  99.39  \\ 
   
    \bottomrule
    \end{tabular}%
    }
  \label{GT_AiF_table}%
\end{table}%

Finally, we conducted an experiment using both the estimated AiF and the ground-truth AiF, along with the EOD computed from 2 and 3 images of the FT dataset. The results are presented in Table \ref{GT_AiF_table}. A noticeable decrease in performance can be observed when using the estimated AiF compared to the ground-truth AiF. All evaluation metrics show improvement in both the 2-image and 3-image settings. These findings indicate that AiF plays a central role in the proposed augmentation strategy, and that enhancing the quality of the estimated AiF can lead to more accurate depth estimation. 

\subsection{Limitations and Future Work}
The proposed augmentation primarily relies on the generated AiF image, which in the current model is generally estimated from two or three input images using a simple approach. Because this estimated AiF is not always fully accurate and may contain minor artifacts, the quality of the resulting EOD maps can be slightly degraded, which in turn may negatively affect depth estimation. A promising direction for future work is to improve the AiF reconstruction process, either through a more robust and accurate estimation strategy or by learning it jointly with the deep depth estimation network as an auxiliary task.

\section{Conclusion}

In this paper, we proposed a deep model together with a focal stack augmentation framework for accurate depth estimation from a minimal focal stack. The proposed augmentation leverages an estimated all-in-focus (AiF) image and the energy of differences (EOD) between the AiF and input images to enrich the reduced stack with informative focus cues. Building on this representation, we introduced a deep network that constructs a deep focus volume from the augmented stack and iteratively refines the depth map using multi-scale ConvGRUs. Extensive experiments on diverse synthetic and real-world datasets demonstrate that the proposed model achieves state-of-the-art or highly competitive performance with only a few input images, while the proposed augmentation consistently helps existing state-of-the-art SFF models and enables comparable accuracy with significantly smaller focal stacks.

{\small
\bibliographystyle{ieee_fullname}
\bibliography{egbib}
}

\end{document}